\newtheorem{remark}{Remark}
\newtheorem{theorem}{Theorem}
\newtheorem{hypothesis}{Hypothesis}
\title{MoM: Mixtures of Scenario-Aware Document Memories for Retrieval-Augmented Generation Systems}
\author[1,2,3]{Jihao Zhao} 
\author[1,2,3]{Zhiyuan Ji}
\author[1,2,3]{Simin Niu}
\author[1,2,3]{Hanyu Wang}
\author[2,3]{Feiyu Xiong}
\author[2,3,\dag]{Zhiyu Li}
\affiliation[1]{School of Information, Renmin University of China, Beijing, China}
\affiliation[2]{MemTensor (Shanghai) Technology Co., Ltd.}
\affiliation[3]{Institute for Advanced Algorithms Research, Shanghai}
\abstract{
The traditional retrieval-augmented generation (RAG) paradigm, which typically engages in the comprehension of relevant text chunks in response to received queries, inherently restricts both the depth of knowledge internalization and reasoning capabilities. To address this limitation, our research transforms the text processing in RAG from passive chunking to proactive understanding, defining this process as document memory extraction with the objective of simulating human cognitive processes during reading. Building upon this, we propose the \textbf{M}ixtures \textbf{o}f scenario-aware document \textbf{M}emories (MoM) framework, engineered to efficiently handle documents from multiple domains and train small language models (SLMs) to acquire the ability to proactively explore and construct document memories. The MoM initially instructs large language models (LLMs) to simulate domain experts in generating document logical outlines, thereby directing structured chunking and core content extraction. It employs a multi-path sampling and multi-perspective evaluation mechanism, specifically designing comprehensive metrics that represent chunk clarity and extraction completeness to select the optimal document memories. Additionally, to infuse deeper human-like reading abilities during the training of SLMs, we incorporate a reverse reasoning strategy, which deduces refined expert thinking paths from high-quality outcomes. Finally, leveraging diverse forms of content generated by MoM, we develop a three-layer document memory retrieval mechanism, which is grounded in our theoretical proof from the perspective of probabilistic modeling: compared to the strategy of fusing embeddings prior to retrieval, independently retrieving memories from each layer and subsequently fusing them can more effectively reduce information loss. Extensive experimental results across three distinct domains demonstrate that the MoM framework not only resolves text chunking challenges in existing RAG systems, providing LLMs with semantically complete document memories, but also paves the way for SLMs to achieve human-centric intelligent text processing. 
}
\begin{document}
\maketitle

\section{Introduction}
Retrieval-Augmented Generation (RAG), as a technical paradigm that combines information retrieval with generative models, effectively mitigates inherent limitations of large language models (LLMs), such as insufficient data freshness \citep{he2022rethinking}, hallucinations \citep{chen2023hallucination,liang2024internal}, and the lack of domain-specific knowledge \citep{li2023chatgpt}. As the core architecture for knowledge-intensive tasks \citep{oche2025systematic}, its effectiveness is fundamentally influenced by the optimization boundaries of the retrieval mechanism. Research has shown that the quality of the retrieved text segments directly determines the upper limit of the performance of RAG systems \citep{lin2023li,qu2024semantic}. Optimal segmentation of documents into semantically complete and coherent segments not only enhances the generation accuracy of LLMs but also significantly improves the system's processing efficiency while reducing computational resource consumption \citep{xu2023berm,su2024dragin}.

However, a profound cognitive gap still persists in current RAG practices. Most of these methods reduce document processing to a mechanistic preprocessing step \citep{gao2023retrieval,lyu2024crud}. This passive approach of segmenting first and then understanding is contrary to the cognitive process of human experts \citep{dong2024don,singh2025agentic}. When studying a complex document, human experts actively construct a mental model: they first grasp the macro-level logical structure, then identify key arguments, and ultimately form a structured memory that is interconnected and hierarchical \citep{spens2024generative,tang2024humanlike}. To bridge this gap, we advocate for a shift in the text processing approach of RAG from passive text chunking to active memory extraction. This study aims to address the core issues arising from the traditional RAG paradigm: How can we enable the model to actively transform an unstructured text stream into a semantically complete and logically coherent structured knowledge, namely, document memory, in a manner similar to domain experts? And how can we efficiently imbue small language models (SLMs) with this deep understanding capability?

To achieve this goal, we propose the \textbf{M}ixtures \textbf{o}f scenario-aware document \textbf{M}emories (MoM) framework. Initially, to establish a macro-level understanding of the document, we instruct LLMs to assume the role of a domain-specific expert. LLMs conduct an in-depth analysis and generate a well-structured document outline. This outline not only serves as an index of the content but also lays the foundation for subsequent processing steps. Secondly, guided by the logical outline, we initiate multi-path memory extraction and evaluation. To ensure the quality of the final extraction, we design two unique metrics for comprehensive evaluation and automatic selection. This approach ensures the domain adaptability of the framework, enabling it to accurately grasp the core knowledge structure and key points of different professional documents. Thirdly, with the aim of transferring this advanced cognitive capability from LLMs to SLMs, we employ reverse engineering to construct a logically rigorous \textbf{C}hain reasoning \textbf{o}f \textbf{M}emory extraction (CoM), which can assist SLMs in thinking. Finally, to fully leverage the multi-level content produced by MoM, we develop a three-layer document memory retrieval algorithm consisting of the logical outline, core content, and the original text. Meanwhile, our theoretical analysis demonstrates that this strategy can more effectively avoid information loss, thereby achieving precise retrieval of the target knowledge.

We summarize contributions of this work as follows:
\begin{itemize}
    \item We propose active memory extraction as an alternative to passive text chunking. By achieving a global understanding of domain-specific documents, we construct structured document memory. Additionally, leveraging reverse reasoning techniques, we enable SLMs to autonomously perform this complex task.
    \item We develop a three-layer document memory retrieval mechanism and provide theoretical proof from the perspective of probabilistic modeling. Compared to the traditional strategy of fusing information before retrieval, independently retrieving from different memory layers and then fusing the results can more effectively reduce information loss, thereby achieving more precise knowledge localization.
    \item To validate the effectiveness of the MoM framework, we conduct experiments and analyses on three datasets from different domains. By obtaining data through multiple channels, we construct 40K training samples and train multiple MemReaders. The results indicate that MoM can adaptively process texts with different structures and domains, generate high-quality document memory, and also demonstrate the feasibility of achieving human-centered high-quality text processing through SLMs.
\end{itemize}

\section{Related Works}
\textbf{Text Chunking in RAG.} As a critical prerequisite for RAG, text chunking profoundly influences the ultimate performance of the system. Mainstream RAG systems and development libraries (such as LlamaIndex and LangChain) commonly adopt traditional chunking strategies, which includes fixed-size chunking, recursive chunking, or segmentation based on grammatical boundaries like sentences and paragraphs \citep{guu2020retrieval,lewis2020retrieval,gao2023retrieval,lyu2024crud}. These methods are inherently context-independent and completely overlook the deep-seated semantic coherence and logical structure of the content. To overcome the aforementioned drawbacks, the research community has begun exploring semantic chunking approaches, such as merging text based on the similarity of sentence embedding vectors \citep{xiao2023c} or decomposing text into atomic factual units like propositions \citep{chen2023dense}. Although these methods outperform traditional strategies in preserving local semantics, they generally follow a bottom-up construction logic. They focus on the relationships between adjacent text units but lack a macroscopic understanding of the document's overall architecture. Consequently, even though they can generate locally coherent segments, when these segments are combined, they may still deviate from the document's overall theme or chapter logic, resulting in logically incomplete or biased knowledge chunks. Even attempts to use LLMs for iterative segmentation still essentially seek breakpoints locally, failing to fundamentally solve this issue while incurring substantial computational costs \citep{duarte2024lumberchunker}.

\textbf{Memory of RAG.} To overcome the limitations of LLMs' finite context windows and endow them with capabilities for continuous learning and long-term interaction, constructing memory systems has emerged as a pivotal research direction in the development of RAG. However, through a systematic examination of existing memory frameworks, we find that the current research focus significantly leans towards managing short-term and long-term memory in dialogue scenarios. Represented by systems such as Mem0 \citep{chhikara2025mem0}, LangMem \footnote{\url{https://github.com/langchain-ai/langmem}}, MemoryScope \citep{ReMe2025}, and MemoBase \footnote{\url{https://github.com/memodb-io/memobase}}, considerable complexity has been developed in conversational memory management, exemplified by Mem0's four-stage memory updating and conflict detection, MemoryScope's importance scoring and temporal decay mechanisms, and graph-structured reasoning in Mem0g \citep{chhikara2025mem0}. In contrast, memory construction for documents remains at a relatively nascent stage, with relatively simplistic processing approaches. MemGPT \citep{packer2023memgpt} employs a paging mechanism to process information segment by segment, while Zep \footnote{\url{https://github.com/getzep/zep}} constructs a document collection for vector-based retrieval, which essentially still adheres to the traditional RAG paradigm of chunking first and understanding later, rather than constructing a holistic, structured memory for documents. Even MemoRAG \citep{qian2025memorag}, which focuses on documents, primarily relies on pointers to navigate and associate between pre-segmented text fragments. This emphasis on conversations over documents reveals a research gap: the lack of an advanced mechanism for proactively constructing structured, semantically coherent memories for documents within the field. Our proposed scenario-aware document memory extraction framework can serve as a text processing module for these systems, facilitating the development of the entire field.

\section{MoM Framework}
\subsection{Document Memory}
The core objective of the MoM framework is to simulate the process by which domain experts deeply read and digest documents, transforming unstructured raw text $D$ into a structured, multi-level, and easily retrievable knowledge, which we refer to as document memory, denoted as $\text{M}_{\text{doc}}$. The entire framework can be viewed as a process of learning a mapping function $f_{\text{MoM}}: D \to \text{M}_{\text{doc}}$, which encompasses three key stages: memory extraction, CoM construction, and model training.

\subsubsection{Task Definition}
Document memory is not merely a simple segmentation of the original text but rather a structured knowledge system that has undergone deep understanding, refinement, and reconstruction. Its core characteristics are domain-specificity, structural organization, and completeness. Formally, we define the document memory corresponding to a document $D$ as a triplet: $\text{M}_{\text{doc}} = \{O, C, A\}$, where:
\begin{itemize}
    \item $O$ (Outline): Represents the macro-logical structure of the document. It is an ordered set composed of core topics, $O = \{o_1, o_2, \dots, o_n\}$, providing a high-level view and indexing framework for document information organization.
     \item $C$ (Core Content): Represents core viewpoints of the document. It is a highly condensed set of knowledge points extracted from the content corresponding to each outline node $o_i$, $C = \{c_1, c_2, \dots, c_n\}$.
      \item $A$ (Atomic Chunks): Represents the structured, fine-grained content segmentation of the original document $\mathcal{D}$ guided by $O$. $A = \{a_1, a_2, \dots, a_n\}$ exhibit stronger semantic cohesion compared to traditional segmentation methods.
\end{itemize}

\begin{figure}[t]
    \centering
    \includegraphics[width=\textwidth]{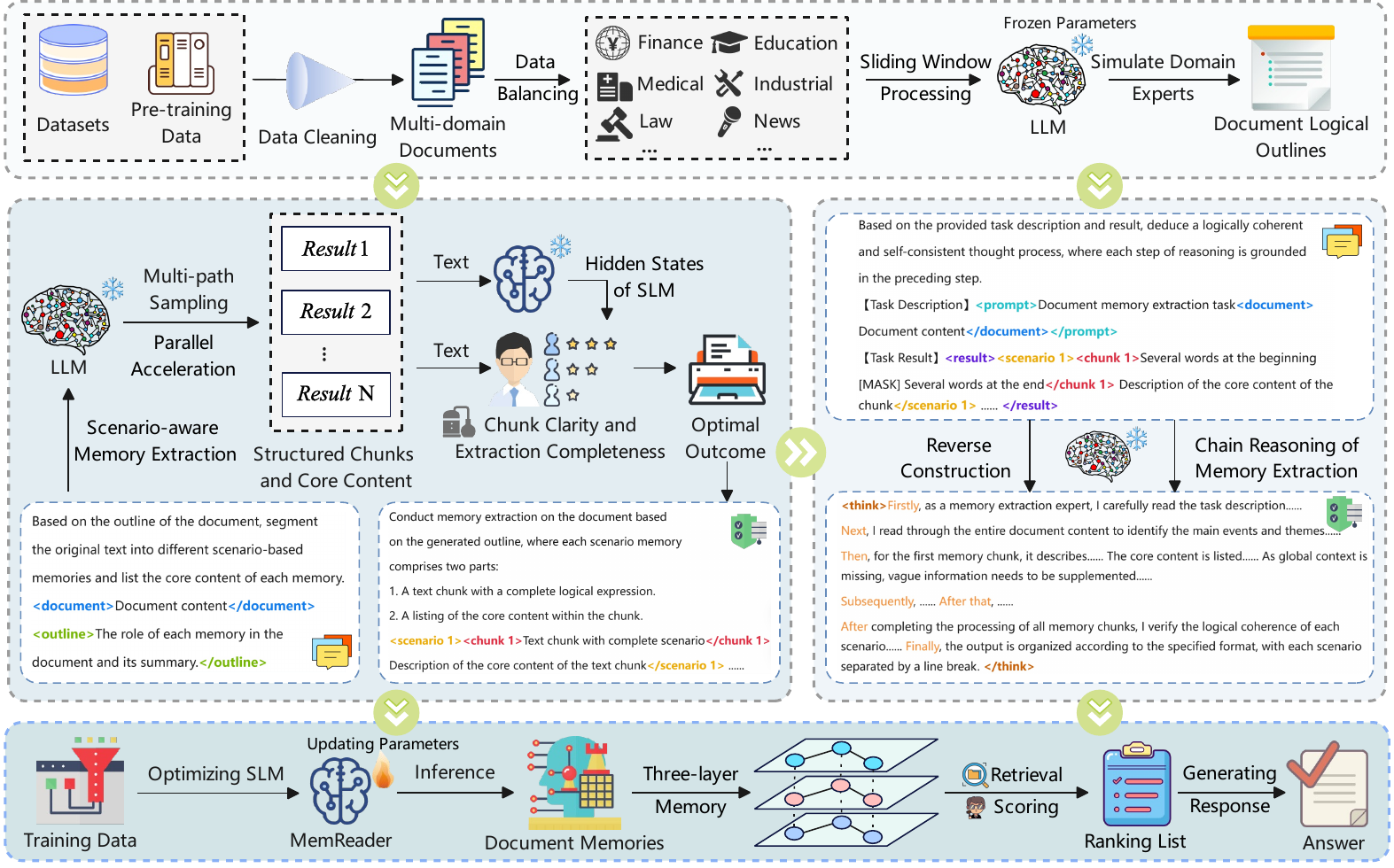}
    \caption{Overview of the entire process of our MoM framework: Logical outline generation, multi-path memory extraction and evaluation, reverse CoM construction, as well as three-layer retrieval mechanism.}
    \label{fig:kuangjia}
\end{figure}

\subsubsection{Scenario-aware Document Memory Extraction}
To generate high-quality document memory, we design an extraction process that incorporates expert simulation, multi-path sampling, and multi-dimensional evaluation.

We initially leverage a highly capable LLM, designated as the guiding model $\mathcal{M}_G$, to emulate the expertise of specialists within particular domains. Through the crafting of scenario-aware prompts, we steer $\mathcal{M}_G$ to perform a comprehensive analysis of the input document $\mathcal{D}$, resulting in the generation of its logical outline $O$. Following this, with $O$ serving as a structural framework, we instruct $\mathcal{M}_G$ to further distill and refine the core content $C$ as well as the corresponding atomic chunks $A$ for each individual outline.

Considering the randomness and limitations of single generation, we introduce a multi-path sampling strategy. By adjusting the decoding parameters of $\mathcal{M}_G$, we generate $N$ candidate document memory sets $\{\text{M}_{\text{doc}}^{(1)}, \text{M}_{\text{doc}}^{(2)}, \dots, \text{M}_{\text{doc}}^{(N)}\}$ for the same document $\mathcal{D}$. To select the optimal solution from these candidates, we design two key quantitative evaluation metrics:

\textbf{Atomic Chunks Clarity.} This metric aims to evaluate the semantic rationality of the segmentation among atomic chunks $A$. An ideal segmentation should ensure high semantic cohesion within each chunk and clear semantic boundaries between chunks. We leverage a language model $\mathcal{M}_{\text{eval}}$ to assess the marginal probability of the semantic boundary existing between any two consecutive chunks $a_i$ and $a_{i+1}$. The clarity score is defined as follows:
$$
\mathcal{S}_{\text{clarity}}(\text{M}_{\text{doc}}) = \frac{1}{n-1} \sum_{i=1}^{n-1} P_{\mathcal{M}_{\text{eval}}}(b_{i,i+1} | a_i, a_{i+1})
$$
where $n$ is the total number of atomic chunks, and \( b_{i,i+1} \) denotes the event that a semantic boundary exists between chunks \( a_i \) and \( a_{i+1} \). A higher score indicates a clearer and more logical overall chunking structure of the document.

\textbf{Core Content Completeness.} This metric is used to evaluate the effectiveness and conciseness of the core content $C$ in covering the information of the original document $\mathcal{D}$. It is achieved by calculating the conditional perplexity of generating the entire chunks $\mathcal{A}$ given $C$, and introducing a penalty term for the length of the core content. Its formal definition is as follows:

$$\mathcal{S}_{\text{comp}}(\text{M}_{\text{doc}}) = \frac{1}{n} \sum_{i=1}^{n} \frac{1}{PPL(a_i | c_i) \cdot \log(|c_i|)}$$
where$|c_i|$ is the total number of tokens in the core content. A higher score signifies that, on average, each core content provides strong, concise support for its respective original chunk.

\textbf{Optimal Document Memory Selection.} We rank all $N$ candidate memories $\{ \text{M}_{\text{doc}}^{(i)} \}$ in descending order based on $\mathcal{S}_{\text{clarity}}$ and $\mathcal{S}_{\text{comp}}$, respectively, resulting in two independent ranking lists. Subsequently, we draw on the reciprocal rank fusion algorithm \citep{cormack2009reciprocal} to calculate a comprehensive score for each candidate. For any candidate $\text{M}_{\text{doc}}^{(i)}$, its comprehensive ranking score $\mathcal{S}_{\text{RRF}}$ is defined as:
$$
\mathcal{S}_{\text{RRF}}(\text{M}_{\text{doc}}^{(i)}) = \frac{1}{k + \text{rank}_{\text{clarity}}^{(i)}} + \frac{1}{k + \text{rank}_{\text{comp}}^{(i)}}
$$
where $\text{rank}_{\text{clarity}}^{(i)}$ and $\text{rank}_{\text{comp}}^{(i)}$ are the positions of $\text{M}_{\text{doc}}^{(i)}$ in the ranking lists for clarity and completeness, respectively, and $k$ is a smoothing constant (typically set to 60). All candidates will be finally ranked according to their $\mathcal{S}_{\text{RRF}}$ scores.

\subsubsection{Reverse Construction of CoM}
To enable SLMs to master such complex knowledge construction capabilities, merely providing supervised data consisting of inputs $\mathcal{D}$ and outputs $\text{M}_{\text{doc}}$ is insufficient. In order to instill deeper human-like reading abilities, we introduce the reverse construction strategy of CoM. Specifically, we once again leverage the guiding model $\mathcal{M}_G$, providing it with the original document $\mathcal{D}$ and the optimal document memory $\text{M}_{\text{doc}}$, and generating the reasoning path $\mathcal{P}$ through specific prompts. This reasoning path constitutes high-quality CoM data and becomes an essential component for training SLMs.

\subsubsection{MemReader}
Based on the aforementioned process, we construct approximately 40K high-quality training samples by employing DeepSeek-R1 to act as $\mathcal{M}_G$. Each sample is composed of a triplet $(\mathcal{D}, \mathcal{P}, \text{M}_{\text{doc}})$. Our objective is to transform a SLM into a MemReader, enabling it to directly generate reasoning paths and document memories from raw documents. For a training sample, let the input be $s$ (the document $\mathcal{D}$ and related instructions), and the target output sequence be $o$ (the concatenation of $\mathcal{P}$ and $\text{M}_{\text{doc}}$). The loss function is defined as:
\begin{equation*} 
\mathcal{L}_\text{F}(\theta) = -\frac{1}{\tau} \sum_{t = 1}^{\tau} \log P(o_t | o_{<t}, s; \theta) \label{eq:cross_entropy_loss} 
\end{equation*} 
where $o_t$ represents the $t$-th token in the target sequence $o$, $o_{<t}$ denotes the prefix of the target sequence up to position $t-1$, $s$ is the input context, $\theta$ represents the learnable parameters of the SLM, and $\tau$ denotes the length of the target output sequence $o$.

\subsection{Three-layer Document Memory Retrieval}
Based on the document memory $\text{M}_{\text{doc}} = \{O, C, A\}$ generated within the MoM framework, we construct a three-layer document memory retrieval mechanism, corresponding respectively to the global summary $O$, the core content $C$, and the original chunks $A$. This design is not merely based on empirical evidence; rather, it is also theoretically substantiated by the underpinnings of our proposed probabilistic model.

We regard the user query $q$ as a random vector sampled from a mixed distribution. Specifically, the query space $Q$ is composed of two distinct types of queries: global queries $Q_{abs}$, which are designed to represent and seek information related to the global summary $O$, and local queries $Q_{query}$, which aim to represent and retrieve details pertaining to the core content $C$, alongside enabling access to the original chunks $A$ for more granular information when necessary. We assume that these two types of queries follow different Gaussian distributions in the embedding space:
global query $q_{abs} \sim \mathcal{N}(\mu_{abs}, \Sigma_{abs})$ and local query $q_{query} \sim \mathcal{N}(\mu_{query}, \Sigma_{query})$, where $\mu$ is the mean vector and $\Sigma$ is the covariance matrix.

\begin{hypothesis}[Semantic Divergence Hypothesis]
\label{hypothesis}
We assume that the semantic centers of global queries and local queries are significantly separated in the embedding space. Formally, it is expressed as:
\begin{equation*}
    \|\mu_{abs} - \mu_{query}\|_2 > 0
\end{equation*}
This implies that the directions of $\mu_{abs}$ and $\mu_{query}$ are significantly different, that is, their inner-product $\mu_{abs}^T \mu_{query} = \cos(\theta)<1$.
\end{hypothesis}

This hypothesis is rooted in the fundamental duality of human information-seeking behavior and linguistic expression. The user's query intentions can typically be clearly classified into macro-level comprehension and micro-level exploration, and these two types of vectors form two distinct clusters in the high-dimensional embedding space. Therefore, for the hierarchical multi-vector (HMV), its vectors $V_{abs}$ and $V_{query}$ serve as unbiased estimates of the corresponding semantic centers $\mu_{abs}$ and $\mu_{query}$, which can be expressed as $E[V_{abs}] = \mu_{abs}$ and $E[V_{query}] = \mu_{query}$. In contrast, the single-vector fusion (SVF) produces a fused vector $V_{fused}$ that constitutes a biased, compromise estimate, thereby diluting the representational purity of either semantic mode:
\begin{equation*}
    E[V_{fused}] = \mu_{fused} = (1 - w)\mu_{abs}+w\mu_{query},w\in(0,1)
\end{equation*}

\begin{theorem}
\label{theorem1}
For user queries, the HMV outperforms the SVF in terms of expected similarity. Specifically, we need to prove the following two points:
\begin{itemize}
    \item For a global query $q_{abs}$, we have $E[q_{abs}^T V_{abs}] > E[q_{abs}^T V_{fused}]$.
    \item For a local query $q_{query}$, we have $E[q_{query}^T V_{query}] > E[q_{query}^T V_{fused}]$.
\end{itemize}
Since we do not need to consider the vector length and only involve the direction representation, all vectors are normalized to unit vectors for subsequent calculations.
\end{theorem}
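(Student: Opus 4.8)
The plan is to reduce both inequalities to the Semantic Divergence Hypothesis by combining linearity of expectation with the independence of a query from the stored memory vectors. First I would make explicit the modeling assumption that a sampled query $q$ is statistically independent of the vectors it is scored against, so that the expected similarity factorizes as $E[q^T V] = E[q]^T E[V]$. Together with the unbiasedness relations $E[V_{abs}] = \mu_{abs}$, $E[V_{query}] = \mu_{query}$ and the fusion identity $E[V_{fused}] = (1-w)\mu_{abs} + w\mu_{query}$, every expected inner product in the statement collapses to a deterministic inner product between semantic centers. Since all vectors are unit-normalized, I treat the centers as unit vectors, giving $\mu_{abs}^T\mu_{abs} = \mu_{query}^T\mu_{query} = 1$ and $\mu_{abs}^T\mu_{query} = \cos\theta$.

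For the global query I would then compute the similarity gap directly:
\begin{equation*}
E[q_{abs}^T V_{abs}] - E[q_{abs}^T V_{fused}] = \mu_{abs}^T\mu_{abs} - \mu_{abs}^T\bigl((1-w)\mu_{abs} + w\mu_{query}\bigr) = w\bigl(1 - \cos\theta\bigr).
\end{equation*}
Because $w \in (0,1)$ and the hypothesis supplies $\cos\theta < 1$, this gap is strictly positive, which settles the first inequality. The local query is entirely symmetric: swapping the roles of $\mu_{abs}$ and $\mu_{query}$ (equivalently, trading the weight $w$ for $1-w$) yields
\begin{equation*}
E[q_{query}^T V_{query}] - E[q_{query}^T V_{fused}] = (1-w)\bigl(1 - \cos\theta\bigr) > 0,
\end{equation*}
which closes the second inequality and completes the argument.

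The algebra itself is routine; the substantive work lies in defending the two structural assumptions, and this is where I expect the main obstacle. The independence claim $E[q^T V] = E[q]^T E[V]$ is clean only if the query distribution and the stored representation are genuinely independent, so I would state it as an explicit part of the probabilistic model rather than smuggle it in. A secondary subtlety worth flagging is the unit-norm treatment of the centers: strictly, the mean of random unit vectors obeys $\|\mu\|^2 \le 1$, so I would either posit that the centers lie on the unit sphere (consistent with the normalization convention of the theorem) or, more carefully, keep $\|\mu_{abs}\|^2$ in place of $1$ and observe that the global gap equals $w(\|\mu_{abs}\|^2 - \mu_{abs}^T\mu_{query})$, which remains positive whenever $\|\mu_{abs}\|^2 > \mu_{abs}^T\mu_{query}$ — a condition that Cauchy–Schwarz guarantees under equal center norms together with the strict separation of the hypothesis.
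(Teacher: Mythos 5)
Your proof is correct and follows essentially the same route as the paper's: both factorize the expected similarity via independence of the query from the stored vectors (the paper does this through the law of total expectation), substitute the unbiasedness and fusion identities, and reduce both inequalities to $w(1-\cos\theta)>0$ and $(1-w)(1-\cos\theta)>0$ under the Semantic Divergence Hypothesis. The only difference is one of care, not of method: you make the independence assumption explicit and flag the genuine subtlety that a mean of unit vectors need not itself have unit norm (offering a Cauchy--Schwarz patch), whereas the paper silently treats $\mu_{abs}^T\mu_{abs}=1$.
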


\begin{proof}
First, we analyze the expected similarity of the HMV method. According to the law of total expectation and the linearity of expectation, we can obtain:
\begin{align*}
E[q_{\text{abs}}^T V_{\text{abs}}] &= E[E[q_{\text{abs}}^T V_{\text{abs}} | q_{\text{abs}}]]= E[q_{\text{abs}}^T E[V_{\text{abs}}]] = E[q_{\text{abs}}^T \mu_{\text{abs}}] \\
&= E[q_{\text{abs}}]^T \mu_{\text{abs}}= \mu_{\text{abs}}^T \mu_{\text{abs}} = 1 
\end{align*}

Next, we analyze the expected similarity of the SVF method:

\begin{align*}
E[q_{abs}^T V_{fused}] &= E[q_{abs}^T E[V_{fused}]]= E[q_{abs}^T \mu_{fused}] = E[q_{abs}^T ((1-w)\mu_{abs} + w\mu_{query})] \\
&= (1-w)E[q_{abs}^T \mu_{abs}] + wE[q_{abs}^T \mu_{query}] = (1-w)(\mu_{abs}^T \mu_{abs}) + w(\mu_{abs}^T \mu_{query}) \\
&= (1-w) \cdot 1 + w \cdot \cos(\theta)
\end{align*}
where $\cos(\theta) = \mu_{\text{abs}}^T \mu_{\text{query}}$. According to the Hypothesis \ref{hypothesis}, $\mu_{\text{abs}}$ and $\mu_{\text{query}}$ point in different directions, so $\theta > 0$ and $\cos(\theta) < 1$. Since $w \in (0,1)$, we have $w \cdot \cos(\theta) < w$. Therefore, $(1 - w) + w \cdot \cos(\theta) < (1 - w) + w = 1$. That is, $E[q_{\text{abs}}^T V_{\text{abs}}] > E[q_{\text{abs}}^T V_{\text{fused}}]$. The second point can be proved in the same way.
\end{proof}

Further, we not only demonstrate that, on average, the representation method of the HMV outperforms the SVF but also offer a more robust probabilistic guarantee through the introduction of probability bounds.
\begin{theorem}
\label{theorem2}
For any small positive deviation threshold $\epsilon > 0$, the probability that the deviation of the retrieval result of the HMV strategy from the ideal case is greater than $\epsilon$ is lower than that of the SVF strategy:

\begin{itemize}
    \item $P(q_{abs}^T V_{abs} < 1 - \epsilon) < P(q_{abs}^T V_{fused} < 1 - \epsilon)$
    \item $P(q_{query}^T V_{query} < 1 - \epsilon) < P(q_{query}^T V_{fused} < 1 - \epsilon)$
\end{itemize}
\end{theorem}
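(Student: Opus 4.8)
The plan is to reduce both probability inequalities to a single comparison between standardized thresholds of the underlying similarity scores, exploiting the mean computation already established in Theorem~\ref{theorem1}. I focus on the global case $P(q_{abs}^T V_{abs} < 1-\epsilon)$ versus $P(q_{abs}^T V_{fused} < 1-\epsilon)$; the local case follows by the identical argument with the roles of $\mu_{abs}$ and $\mu_{query}$ exchanged. Writing $X := q_{abs}^T V_{abs}$ and $Y := q_{abs}^T V_{fused}$, the key structural observation is that $V_{fused} = (1-w)V_{abs} + wV_{query}$ makes $Y$ an affine functional of the same Gaussian query, so $Y = (1-w)X + w\,q_{abs}^T V_{query}$. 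Since $q_{abs}$ is Gaussian and each inner product is linear in it, both $X$ and $Y$ are (conditionally, and hence marginally under the unbiasedness $E[V]=\mu$) Gaussian. I would therefore express each tail through the standard normal CDF $\Phi$ as
\[
P(X < 1-\epsilon) = \Phi\!\left(\frac{(1-\epsilon)-m_X}{\sigma_X}\right), \qquad P(Y < 1-\epsilon) = \Phi\!\left(\frac{(1-\epsilon)-m_Y}{\sigma_Y}\right),
\]
where $m_X,m_Y$ and $\sigma_X,\sigma_Y$ denote the respective means and standard deviations.

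Next I would supply the means directly from Theorem~\ref{theorem1}: $m_X = \mu_{abs}^T\mu_{abs} = 1$ and $m_Y = (1-w)+w\cos\theta = 1 - w(1-\cos\theta)$. Hypothesis~\ref{hypothesis} guarantees $\cos\theta < 1$, so the mean gap $\delta := m_X - m_Y = w(1-\cos\theta)$ is \emph{strictly positive}. Because $\Phi$ is strictly increasing, the target inequality $P(X<1-\epsilon) < P(Y<1-\epsilon)$ is equivalent to the threshold comparison
\[
\frac{-\epsilon}{\sigma_X} < \frac{\delta-\epsilon}{\sigma_Y}.
\]
The numerator on the right exceeds the one on the left by exactly $\delta>0$, which is the quantitative content of the Semantic Divergence Hypothesis. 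In the homoscedastic case $\sigma_X=\sigma_Y$ this is immediate, and I would present that as the clean base case.

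The main obstacle is the variance comparison in the heteroscedastic regime, since fusion can either inflate or deflate $\sigma_Y$ relative to $\sigma_X$ depending on the covariance structure of $V_{abs},V_{query}$ and on $\Sigma_{abs}$. My plan is to neutralize this using the ``small deviation threshold'' regime that the theorem explicitly invokes: cross-multiplying, the claim becomes $\epsilon(\sigma_X - \sigma_Y) < \delta\,\sigma_X$. Since the right-hand side $\delta\sigma_X$ is a fixed positive constant while the left-hand side is $O(\epsilon)$, the inequality holds automatically for every $\epsilon>0$ when fusion does not reduce variance ($\sigma_Y \ge \sigma_X$), and otherwise for all $\epsilon$ below the explicit cutoff $\epsilon^\star = \delta\sigma_X/(\sigma_X-\sigma_Y)$. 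The underlying intuition I would stress is that $m_X=1$ is the maximal attainable expected similarity (the Cauchy--Schwarz ceiling for unit vectors), so $\{X<1-\epsilon\}$ is always a below-mean deviation, whereas the fused mean already sits a fixed distance $\delta$ below that ceiling, pushing $\{Y<1-\epsilon\}$ toward or across its own mean and hence to larger probability. Finally, for readers wanting robustness beyond the Gaussian approximation, I would note that the same separation can be obtained by upper-bounding the HMV tail $P(X<1-\epsilon)$ via Cantelli's inequality and invoking the assumed query distribution for the fused tail, with the strictly positive gap $\delta$ again driving the conclusion.
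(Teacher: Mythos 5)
Your proposal is correct under the same modeling assumptions the paper itself adopts (inner products treated as Gaussian, means inherited from Theorem~\ref{theorem1}, strict gap $\delta = w(1-\cos\theta) > 0$ from Hypothesis~\ref{hypothesis}), but it takes a genuinely different and tighter route than the paper's proof. The paper splits on the size of $\epsilon$ relative to $C = w(1-\cos\theta)$: for $\epsilon \le C$ it lower-bounds the SVF probability by $\Phi(0) = 1/2$ while upper-bounding the HMV tail via Hoeffding's inequality, and for $\epsilon > C$ it merely compares the decay rates $\exp(-\epsilon^2)$ versus $\exp(-(\epsilon-C)^2)$ of two upper bounds --- a step that does not by itself order the underlying probabilities and that tacitly assumes $\sigma_{\text{HMV}} = \sigma_{\text{SVF}}$; moreover, its Case~1 rhetoric that the Hoeffding bound is ``negligible'' is shaky, since $\exp(-\epsilon^2/(2\sigma_{\text{HMV}}^2))$ is close to $1$ precisely when $\epsilon$ is small. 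You instead write both tails exactly as normal CDF values and reduce the claim, by strict monotonicity of $\Phi$, to the single threshold comparison $-\epsilon/\sigma_X < (\delta-\epsilon)/\sigma_Y$. This buys three things: the argument is unified (no case split on $\epsilon$ versus $C$), it is rigorous given Gaussianity where the paper's Case~2 is only heuristic, and it surfaces the variance asymmetry the paper silently ignores --- the inequality holds for all $\epsilon$ when $\sigma_Y \ge \sigma_X$, and otherwise for $\epsilon < \epsilon^\star = \delta\sigma_X/(\sigma_X - \sigma_Y)$; since $\epsilon^\star > \delta = C$, your validity region strictly contains the only regime the paper handles soundly, and the explicit cutoff gives concrete meaning to the theorem's vague phrase ``small positive deviation threshold.'' What the paper's framing offers in exchange is modest: its Case~1 comparison ($\ge 1/2$ versus $< 1/2$) is variance-free on both sides, and the concentration-inequality language matches the ``robust probabilistic guarantee'' narrative of the main text. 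One caveat on your write-up: marginal Gaussianity of $Y$ does not follow from conditional Gaussianity plus unbiasedness (a Gaussian mixture is generally non-Gaussian when $V_{\text{fused}}$ is random), but the paper makes the identical idealization, so this is not a gap relative to the published proof.
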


In summary, the HMV method not only exhibits superior performance in terms of expectation but also maintains a consistently high similarity score with extremely low probability of obtaining a low similarity score. In contrast, the SVF model has a much higher probability of achieving a similarly low similarity score. Detailed analysis and proof are presented in Appendix \ref{Proof of Theorem 2}.

\begin{remark}
The core insights of Theorem \ref{theorem1} and Theorem \ref{theorem2} lie in the fact that semantic fusion is a costly compromise. The SVF strategy creates a semantic average by compressing globally concepts and local details into a single vector. This fused vector occupies a compromised position in the embedding space and fails to perfectly represent the original intentions of either side. As a result, when a user query explicitly targets global or local information, its similarity to this compromised vector is necessarily lower than its similarity to a dedicated vector. Our theory rigorously explains this intuition mathematically: preserving the independence of information levels is not only beneficial but also probabilistically superior as a retrieval method because it fundamentally minimizes the loss of key features due to forced information fusion. Hence, the three-layer document memory retrieval mechanism can maximize the preservation of document information at different granularities, thereby providing more accurate and comprehensive context for subsequent generation tasks.
\end{remark}

\section{Experiment and Analysis}
\subsection{Experimental Setup}
\textbf{Datasets and Metrics.} The experiment primarily select CRUD \citep{lyu2024crud} in the news domain, OmniEval \citep{wang2024omnieval} in the financial domain, and MultiFieldQA\_zh \citep{bai2023longbench} across multiple domains as the evaluation benchmarks for domain document question answering. Among them, CRUD focuses on long-answer generation; OmniEval provides manually annotated data across 5 task types and 16 financial topics, enabling a comprehensive assessment of the retrieval and generation quality of RAG systems in the vertical domain; multifieldqa\_zh is derived from the LongBench benchmark. The evaluation metrics uniformly adopt the BLEU series, ROUGE-L, and METEOR, which respectively measure n-gram overlap, longest common subsequence, and the matching degree of synonyms and syntactic variations. 

\textbf{Baselines.} We primarily compare MoM with six representative methods spanning from rule-based to semantic-based and then to LLM-driven approaches. Original chunking \footnote{\url{https://github.com/run-llama/llama_index}} merely divides long texts into fixed-length chunks while disregarding sentence boundaries. The Llama\_index method \citep{langchain} maintains sentence boundaries while ensuring that the number of tokens in each chunk is close to a preset threshold. Similarity chunking \citep{xiao2023c} utilizes sentence embedding models to partition texts according to semantic similarity, effectively grouping highly relevant sentences together. LumberChunker \citep{duarte2024lumberchunker}, for the first time, introduces LLMs into the segmentation process. By using prompts, it instructs the model to judge whether there is a topic shift segment by segment and dynamically outputs the optimal segmentation points. MoC \citep{zhao2025moc} adopts a framework combining a small router and meta-chunkers to balance precision and efficiency, representing a novel paradigm for parameter-efficient chunking.

\textbf{Implementation Details.} In our approach, the construction of core training data leverages the DeepSeek-R1 \footnote{\url{https://huggingface.co/deepseek-ai/DeepSeek-R1}}. To stimulate the diversity of content generated by the model, we set temperature = 0.7 and top\_p = 0.8. For the MemReader implementation, we select Qwen2.5-1.5B \footnote{\url{https://huggingface.co/Qwen/Qwen2.5-1.5B-Instruct}}, Qwen2.5-3B \footnote{\url{https://huggingface.co/Qwen/Qwen2.5-3B-Instruct}} and Qwen2.5-7B \footnote{\url{https://huggingface.co/Qwen/Qwen2.5-7B-Instruct}} as base models for training. During model evaluation, we primarily utilize Qwen2.5-7B and Qwen2.5-14B \footnote{\url{https://huggingface.co/Qwen/Qwen2.5-14B-Instruct}}. All language models used in experiments are of the instruction version and are loaded with float16 precision to optimize computational efficiency. To enable retrieval-based QA, we construct a vector database using Milvus and choose bge-base-zh-v1.5 \footnote{\url{https://huggingface.co/BAAI/bge-base-zh-v1.5}} as the embedding model. We set top\_k= 8 to retrieve the most relevant contextual information. The hardware configuration is divided according to task requirements: model training and text processing are carried out on the NVIDIA A800 80G, while evaluation is completed on the MetaX C500 64G.

\renewcommand{\arraystretch}{1.4} 
\setlength{\extrarowheight}{1pt} 
\begin{table*}[t]
\caption{Main experimental results are presented in three domain QA datasets. B-1, B-Avg, ROU., and MET. represent BLEU-1, BLEU-AVG, ROUGE-L, and METEOR, respectively. The best result is in bold, and the second best result is underlined.}
\label{main-performance}
\centering
\resizebox{\textwidth}{!}{%
\begin{tabular}{lcccc|cccc|cccc}
\toprule
\multirow{2}{*}{\textbf{Chunking Methods}} & \multicolumn{4}{c}{\textbf{CRUD}} & \multicolumn{4}{c}{\textbf{OmniEval}}  & \multicolumn{4}{c}{\textbf{MultiFieldQA}}  \\
 &\textbf{B-1} & \textbf{B-Avg} & \textbf{ROU.} & \textbf{MET.} &\textbf{B-1} & \textbf{B-Avg} & \textbf{ROU.} & \textbf{MET.} &\textbf{B-1} & \textbf{B-Avg} & \textbf{ROU.} & \textbf{MET.} \\
\midrule
Original&0.5022 & 0.3824 & 0.5654 & 0.7324 & 0.1906 & 0.1006 & 0.2254 & 0.3904 & 0.1707 & 0.0684 & 0.2315 & 0.3650 \\
Llama\_index&0.5312 & 0.4114 & 0.5896 & 0.7449 & 0.1969 & 0.1065 & 0.2350 & 0.4040 & 0.1732 & 0.0765 & 0.2363 & 0.3726 \\
Semantic Chunking&0.5188 & 0.3985 & 0.5823 & 0.7434 & 0.1913 & 0.0971 & 0.2240 & 0.3821 & 0.1609 & 0.0576 & 0.2191 & 0.3468 \\
\addlinespace[2pt] 
\cdashline{1-13} 
LumberChunker&0.5061 & 0.3910 & 0.5701 & 0.7399 & 0.1997 & 0.1092 & 0.2375 & 0.4085 & 0.1841 & 0.0838 & 0.2426 & 0.3809 \\
MoC MetaChunker&0.5456 & 0.4225 & 0.6031 & 0.7546 & 0.2042 & \underline{0.1128} & 0.2457 & 0.4141 & 0.1707 & 0.0728 & 0.2255 & 0.3512 \\
Qwen2.5-14B&0.5329 & 0.4127 & 0.5920 & 0.7502 & \underline{0.2048} & \textbf{0.1140} & \underline{0.2473} & 0.4160 & 0.1883 & \textbf{0.0890} & 0.2497 & 0.3827 \\
Qwen3-14B&0.5382 & 0.4167 & 0.5953 & 0.7531 & 0.1907 & 0.1040 & 0.2329 & 0.4080 & 0.1800 & \underline{0.0873} & 0.2412 & 0.3759 \\
\addlinespace[2pt] 
\cdashline{1-13} 
\addlinespace[2pt]
\rowcolor[rgb]{0.94,0.94,0.94} MemReader-1.5B & 0.5513 & 0.4314 & \underline{0.6106} & \underline{0.7611} & 0.1908 & 0.1023 & 0.2354 & 0.4083 & 0.1774 & 0.0771 & 0.2424 & 0.3900\\
\rowcolor[rgb]{0.94,0.94,0.94} MemReader-3B& \underline{0.5539} & \underline{0.4325} & 0.6098 & 0.7600 & 0.1936 & 0.1071 & 0.2394 & \underline{0.4171} & \underline{0.1897} & 0.0840 & \underline{0.2559} & \underline{0.3941}\\
\rowcolor[rgb]{0.94,0.94,0.94} MemReader-7B& \textbf{0.5565} & \textbf{0.4372} & \textbf{0.6152} & \textbf{0.7669} & \textbf{0.2056} & 0.1123 & \textbf{0.2500} & \textbf{0.4229} & \textbf{0.2000} & 0.0837 & \textbf{0.2637} &\textbf{ 0.4043}\\
\bottomrule
\end{tabular}%
}
\end{table*}

\subsection{Main Results}
To comprehensively evaluate the effectiveness of the MoM framework, we conduct extensive experiments on three QA datasets from distinct professional domains, as detailed in Table \ref{main-performance}. The experimental results demonstrate that in the CRUD benchmark, our proposed MemReader exhibits significant advantages, achieving the best performance across all four evaluation metrics. Notably, even the smaller-scale MemReader-3B and MemReader-1.5B outperform all baseline methods, highlighting the efficiency and effectiveness of our framework. To further assess the generalization capability of MemReader, we also analyze its performance on two additional, more challenging datasets: OmniEval and MultiFieldQA. In the OmniEval dataset, all methods encounter difficulties, primarily due to the substantial amount of discrete information present in the financial dataset, which differs significantly from the training documents we utilized. Nevertheless, MemReader-7B still performs comparatively well on three of the metrics. In the MultiFieldQA dataset, MemReader-7B once again secures the best performance, while MemReader-3B achieves the second-best scores on the ROUGE-L and METEOR metrics, which measure semantic similarity. This indicates that our method has advantages in accurately recalling factual information and generating semantically similar answers. Based on comprehensive experimental results, we find that the MoM framework exhibits exceptional performance in handling plain-text QA tasks across different domains, proving its ability to enhance the upper limit of RAG system performance through proactive memory extraction and retrieval.

\renewcommand{\arraystretch}{1.2} 
\setlength{\extrarowheight}{1pt} 
\begin{table}
\centering
\caption{Correlation analysis of atomic chunks clarity with ROUGE-L under different LLMs.}
\label{chunks clarity}
\resizebox{0.65\textwidth}{!}{%
\begin{tabular}{lcccc}
\toprule
\textbf{Metric}& \multirow{2}{*}{\textbf{ROUGE-L}}& \multicolumn{3}{c}{\textbf{Atomic Chunks Clarity}}  \\
\textbf{Chunking Method}& & \textbf{Qwen2.5-3B}  & \textbf{Qwen2.5-7B}  & \textbf{Qwen2.5-14B} \\
\midrule
Original&0.4213 & -0.0071 & 0.0085 & -0.0510 \\
Llama\_index&0.4326 & 0.2710 & 0.1919 & 0.3531 \\
Semantic Chunking&0.4131 & 0.1945 & 0.1447 & 0.2380 \\
Qwen2.5-14B&0.4351 & 0.3782 & 0.3651 & 0.4996 \\
Qwen2.5-72B&0.4405 & 0.3484 & 0.3355 & 0.4908 \\
\addlinespace[2pt] 
\cdashline{1-5} 
\addlinespace[2pt]
\rowcolor[rgb]{0.94,0.94,0.94}  \multicolumn{2}{l}{Pearson Correlation Coefficient} & 0.7044 & 0.7585 & 0.7248\\
\bottomrule
\end{tabular}%
}
\end{table}

\subsection{Exploration of Evaluation Metrics for Memory Extraction}
During the process of memory extraction, a central challenge lies in objectively evaluating the quality of the generated memory fragments. Although traditional metrics can measure information recall to a certain extent, they are often based on retrieval QA and overlook the semantic rationality of memories themselves, thus failing to provide direct scores for rapid assessment. To address this issue, we propose atomic chunks clarity and core content completeness, both of which can directly score the memory extraction results within the MoM framework. Given the complex relationship between the former and chunk quality, we conduct the further investigation. As shown in the last row of Table \ref{chunks clarity}, under three evaluation models, the correlation coefficients between our metric and ROUGE-L reach 0.7044, 0.7585, and 0.7248, respectively, all indicating strong positive correlations. Based on this, we employ Qwen2.5-7B as the base model for metric computation when constructing the document memory dataset. Additionally, we observe that the scores for the original method are notably low, even negative in some cases, suggesting that the original paragraph segmentation is often semantically ambiguous. In contrast, chunking methods processed by algorithms achieve higher positive scores, demonstrating that these methods indeed create more semantically independent units. We also conduct tests using a larger model, Qwen2.5-72B, to ensure that atomic chunks clarity can be applied to larger-scale models.

\renewcommand{\arraystretch}{1.2} 
\setlength{\extrarowheight}{1pt} 
\begin{table}
\centering
\caption{Quantitative evaluation of informational support from retrieved memories for answers.}
\label{Supports Answers}
\resizebox{0.55\textwidth}{!}{%
\begin{tabular}{lccc}
\toprule
\textbf{Metric} & \multicolumn{3}{c}{\textbf{Informational Support}}  \\
\textbf{Chunking Method}  & \textbf{Qwen2.5-7B}  & \textbf{Qwen2.5-14B} & \textbf{Qwen2-7B} \\
\midrule
Original&3.304&3.341&2.988 \\
Llama\_index&3.343&3.385&2.935 \\
Semantic Chunking&3.580&3.695&3.024  \\
\addlinespace[2pt] 
\cdashline{1-4} 
LumberChunker&3.337&3.369&2.906 \\
MoC MetaChunker&3.628&3.671&3.151 \\
Qwen2.5-14B&3.429&3.498&2.955 \\
Qwen3-14B&3.264&3.348&2.903 \\
\addlinespace[2pt] 
\cdashline{1-4} 
\addlinespace[2pt]
\rowcolor[rgb]{0.94,0.94,0.94} MemReader-3B&\textbf{3.149}&\textbf{3.247}&\textbf{2.795} \\
\bottomrule
\end{tabular}%
}
\end{table}

\subsection{How Retrieved Content Supports Answers}
In the evaluation framework of RAG systems, mainstream methods typically focus on the similarity between the finally generated response and the answer. However, this end-to-end evaluation fails to clearly attribute the system's performance bottleneck to either the retrieval module or the generation module. To overcome this challenge, we design an experiment to directly quantify the informational support provided by the retrieved content for the answer. Instead of examining what the system ultimately generates, we evaluate whether the retrieved context itself contains sufficient information to derive the correct answer. Given a query $q$, the RAG system retrieves and returns a set $\mathcal{C} = \{c_1, c_2, \dots, c_k\}$ consisting of $k$ memories. Meanwhile, the query corresponds to a reference answer $\mathcal{A} = (a_1, a_2, \dots, a_m)$. We define the informational support score as:
$$\mathcal{S}_{\text{support}}(\mathcal{A} | \mathcal{C}) = -\frac{1}{m} \sum_{i=1}^{m} \log P(a_i | a_1, \dots, a_{i-1}, \mathcal{C})$$
A smaller $\mathcal{S}_{\text{support}}$ value indicates a higher likelihood of the correct answer being inferred from the retrieved memories, signifying stronger support. We conduct the test on the MultiFieldQA dataset, with the results presented in Table \ref{Supports Answers}. Our proposed MoM method demonstrate superior performance across all evaluation models, which proves that the memories extracted and organized by MemReader-3B can provide more information for downstream tasks.

\section{Conclusion}
This paper aims to bridge the cognitive gap in the current RAG paradigm, namely, how to transition from passive text chunking to proactive document understanding and memory construction that simulates human experts. To address this challenge, we design and implement an innovative MoM framework, which successfully elevates document processing from superficial operations to deep cognition through a holistic solution. It begins by constructing a cognitive blueprint through the generation of a logical outline that simulates domain experts, then utilizes a multi-path extraction and evaluation algorithm to ensure the completeness and accuracy of memory content, and finally employs reverse reasoning strategies CoM to impart this complex cognitive ability to SLMs. Experimental results demonstrate that SLMs empowered by MoM exhibit superior understanding and organizational capabilities when processing multi-domain documents. Meanwhile, we propose and validate a three-layer document memory retrieval mechanism based on probabilistic modeling theory. This mechanism not only makes full use of the multi-level memories generated by MoM in engineering but also theoretically proves its superiority in reducing information loss and enhancing retrieval accuracy. Therefore, the MoM framework not only provides an effective technical pathway for optimizing existing RAG systems but also opens up new avenues for exploring how to construct SLMs that are closer to human thinking patterns and possess greater autonomous cognitive abilities.

\bibliographystyle{plainnat}
\bibliography{main}

\clearpage

\section{Appendix}
\subsection{Proof of Theorem 2}
\label{Proof of Theorem 2}
\begin{proof}
The random variables of our concern are the similarity scores between the global query $q_{\text{abs}}$ and $V_{\text{abs}}$ or $V_{\text{fused}}$ under the HMV and SVF method, respectively. Let $X_{\text{HMV}} = q_{\text{abs}}^T V_{\text{abs}}$ and $X_{\text{SVF}} = q_{\text{abs}}^T V_{\text{fused}}$. According to the derivation from Theorem \ref{theorem1}, we have $E[X_{\text{HMV}}] = 1$ and $E[X_{\text{SVF}}] = (1 - w) + w \cdot \cos(\theta) = 1 - w(1 - \cos(\theta))$.

Both the query $q_{\text{abs}}$ and the vector $V_{\text{abs}}$ are modeled as Gaussian distributions, and their inner product is also a random variable following a Gaussian distribution. We define its variance as $\sigma_{\text{HMV}}^2 = \text{Var}(q_{\text{abs}}^T V_{\text{abs}})$. Similarly, $\sigma_{\text{SVF}}^2 = \text{Var}(q_{\text{abs}}^T V_{\text{fused}})$.

Our goal is to set an upper bound on the probability that the value of $X_{\text{HMV}}$ is less than $1 - \epsilon$. Applying Hoeffding's inequality, we obtain:
\begin{equation*}
    P(X_{\text{HMV}} \le 1 - \epsilon) \le \exp\left(-\frac{\epsilon^2}{2\sigma_{\text{HMV}}^2}\right)
\end{equation*}
Let $C = w(1 - \cos(\theta))$, then the expectation of SVF can be written as $E[X_{\text{SVF}}] = \mu_S = 1 - C$.

\textbf{Case 1:} Small Deviation ($0 < \epsilon \le C$). We standardize $X_{\text{SVF}}$:

\begin{equation*}
    P(X_{\text{SVF}} < 1 - \epsilon) = P\left(\frac{X_{\text{SVF}} - \mu_S}{\sigma_{\text{SVF}}} < \frac{1 - \epsilon - \mu_S}{\sigma_{\text{SVF}}}\right)
\end{equation*}

Let $ \Phi(\cdot) $ denote the cumulative distribution function of the standard normal distribution. Then, we obtain:
\begin{equation*}
    P(X_{\text{SVF}} < 1 - \epsilon) = \Phi\left(\frac{1 - \epsilon - \mu_S}{\sigma_{\text{SVF}}}\right) = \Phi\left(\frac{C - \epsilon}{\sigma_{\text{SVF}}}\right)
\end{equation*}

In this case, $C - \epsilon \ge 0$. Therefore, $P(X_{\text{SVF}} < 1 - \epsilon) \ge \Phi(0) = 0.5$. Since $\epsilon$ is a positive constant, $P(X_{\text{HMV}} \le 1 - \epsilon)$ remains a low-probability event constrained by an exponentially decaying term. Thus, a negligible tail probability is clearly smaller than a significant probability of at least 0.5.

\textbf{Case 2:} Large Deviation ($\epsilon > C$). This case describes a more extreme requirement. We can view both events as the extent to which the random variables deviate from their respective means.

The distance between $X_{\text{HMV}}$ and its mean 1 is at least $\epsilon$:
\begin{equation*}
    P(X_{\text{HMV}} < 1 - \epsilon) = P(X_{\text{HMV}} - E[X_{\text{HMV}}] < -\epsilon)
\end{equation*}

The distance between $X_{\text{SVF}}$ and its mean $1 - C$ is at least $\epsilon - C$:
\begin{equation*}
    P(X_{\text{SVF}} < 1 - \epsilon) = P(X_{\text{SVF}} < \mu_S - (\epsilon - C)) = P(X_{\text{SVF}} - E[X_{\text{SVF}}] < -(\epsilon - C))
\end{equation*}

Since $\epsilon > C$, we know that $\epsilon > \epsilon - C > 0$. Therefore, in practical scenarios within a specialized domain, the probability of a large deviation $\epsilon$ occurring in the specialized method HMV is much smaller than the probability of a small deviation occurring in the general method SVF. On the other hand, formally, the probability upper bound of $X_{\text{HMV}}$ decays as $\exp(-\epsilon^2)$, while the probability of $X_{\text{SVF}}$ decays as $\exp(-(\epsilon - C)^2)$. Since $\epsilon^2 > (\epsilon - C)^2$, the decay rate of $X_{\text{HMV}}$ is faster, and its probability value is smaller.
\end{proof}

\subsection{Main Experimental Details}
In our experiments, we employ a total of five baseline methods, with their specific configurations detailed as follows:
\begin{enumerate}[(a)]
    \item \textbf{Rule-based Chunking Methods}
    \begin{itemize}
        \item \textbf{Original}: This method divides long texts into segments of a fixed length, such as two hundred Chinese characters or words, without considering sentence boundaries.
        
        \item \textbf{Llama\_index} \citep{langchain}: This method considers both sentence completeness and token counts during segmentation. It prioritizes maintaining sentence boundaries while ensuring that the number of tokens in each chunk are close to a preset threshold. We use the \texttt{SimpleNodeParser} function from \texttt{Llama\_index}, adjusting the \texttt{chunk\_size} parameter to control segment length.
    \end{itemize}
    
    \item \textbf{Dynamic Chunking Methods}
    \begin{itemize}
        \item \textbf{Similarity Chunking} \citep{xiao2023c}: Utilizes pre-trained sentence embedding models to calculate the cosine similarity between sentences. By setting a similarity threshold, sentences with lower similarity are selected as segmentation points, ensuring that sentences within each chunk are highly semantically related. This method employs the \texttt{SemanticSplitterNodeParser} from \texttt{Llama\_index}. The size of text chunks is controlled by adjusting the similarity threshold.
        
        \item \textbf{LumberChunker} \cite{duarte2024lumberchunker}: Leverages the reasoning capabilities of LLMs to predict suitable segmentation points within the text. We utilize Qwen2.5 models with 14B parameters, set to full precision.

        \item \textbf{MoC MetaChunker} \citep{zhao2025moc}: MoC trains a lightweight chunker model to automatically learn how to partition long texts into semantically coherent chunks without relying on fixed lengths or predefined rules. Compared to traditional heuristic methods, MetaChunker demonstrates stronger cross-task generalization capabilities, particularly in downstream tasks such as RAG, serving as a representative strong baseline approach.
    \end{itemize}
\end{enumerate}

\subsection{Collection and Refinement of Training Data}
To construct a high-quality training set for document memory extraction, we first extract raw texts from the pre-trained corpus CCI3-HQ \citep{wang2024cci3}. CCI3-HQ itself comprises 500GB of high-quality web pages and book content, encompassing approximately 100B tokens, and is accompanied by quality scores. From this corpus, we select 30K documents spanning multiple domains, including news, social media, literature, academic papers, educational and scientific popularization, legal regulations, healthcare, and more. Concurrently, we construct training and test sets from the open-source CRUD dataset. Specifically, since CRUD provides evidence context snippets corresponding to each QA pair, along with the original news repository, we can retrieve the original news articles containing these context snippets through sentence matching. Taking two-hop QA as an example, CRUD provides two news snippets, namely \textit{news1} and \textit{news2}, which are essential for answering the \textit{question}. We then save the matched original news articles, \textit{matched\_news1} and \textit{matched\_news2}, that contain \textit{news1} and \textit{news2}, respectively. Finally, from a repository of 80K original news articles, we recall 10K news articles that contain the context snippets as the initial texts for evaluation. From the remaining documents, we randomly select 10K data samples for training. After obtaining 40K multi-domain mixed documents, we employ the MoM framework for high-quality memory extraction and CoM construction, providing a reliable foundation for subsequent supervised fine-tuning of SLMs and evaluation of document memory capabilities.

\renewcommand{\arraystretch}{1.4} 
\setlength{\extrarowheight}{1pt} 
\begin{table*}[h]
\centering
\caption{Prompt for guiding the training and inference of MemReader in the MoM framework.}
\label{tab:prompt}
\resizebox{\textwidth}{!}{%
\begin{tabular}{p{0.95\textwidth}}
\toprule
\textbf{Scenario-Aware Document Memory Extraction} \\
\hline
This is a document memory extraction task, and you are an expert in memory extraction. Firstly, carefully analyze the content provided below and generate a logical, self-consistent, and complete reasoning process to solve this problem. Then, utilizing the holistic understanding of the document, create a memory extraction outline, and based on this outline, generate a corresponding number of scenario memories for the given document.

The generation of the memory extraction outline should be approached from the perspective of a domain expert, leveraging the global information of the original document. Each entry in the outline should represent the role of the corresponding text chunk in the scenario memory and its summary content.

When extracting memories from the document according to the generated outline, each scenario memory should consist of two parts:

1. A text chunk with complete logical expression, segmented from the document according to logical and semantic structures. Requirements: Avoid overly short text chunks and achieve a good balance between identifying content transitions and chunk length. Each output text chunk should be composed of the first and last few characters of the chunk, with the intermediate content replaced by ``[MASK]".

2. A description of the core content within the corresponding text chunk.
\\\\
The overall output format is as follows:

\texttt{<think>}\\
Reasoning process\\
\texttt{</think>}

\texttt{<outline>}\\
Memory extraction outline for the document\\
\texttt{</outline>}

\texttt{<scenario>}\\
\texttt{<chunk>}\\
First few characters of text chunk 1 [MASK] Last few characters of text chunk 1\\
\texttt{</chunk>}\\
Description of the core content in text chunk 1\\
\texttt{</scenario>}\\
.......
\\\\
If you understand, reply directly with the content in the specified format, using line breaks to distinguish between different scenario memories. Do not output any other explanatory content, and do not enclose your reply in quotation marks or other delimiters.
\\\\
Document content:\\
\texttt{<document>}Document content\texttt{</document>}\\
\bottomrule
\end{tabular}%
}
\end{table*}

\end{document}